\documentclass[12pt,reqno]{amsart}
\usepackage{graphicx}
\advance\hoffset by -0.5 truecm
\usepackage{amssymb}
\newtheorem{Theorem}{Theorem}[section]
\newtheorem{Lemma}[Theorem]{Lemma}

\begin{document}
\title[Constructive universal approximation]{Geometric separation and constructive universal approximation with two hidden layers}

\author[C. Sung]{Chanyoung Sung}
 \address{Dept. of Mathematics Education  \\
          Korea National University of Education \\
          Cheongju, Korea
          }
\email{cysung@kias.re.kr}

\keywords{Universal approximation, Neural network}

\subjclass[2020]{41A46, 68T07, 54D15}

\date{}


\begin{abstract}
We give a geometric construction of neural networks that separate disjoint compact subsets of $\Bbb R^n$, and use it to obtain a constructive universal approximation theorem. Specifically, we show that networks with two hidden layers and either a sigmoidal activation (i.e., strictly monotone bounded continuous) or the ReLU activation can approximate any real-valued continuous function on an arbitrary compact set $K\subset\Bbb R^n$ to any prescribed accuracy in the uniform norm. For finite $K$, the construction simplifies and yields a sharp depth-2 (single hidden layer) approximation result.
\end{abstract}

\maketitle


\section{Introduction}

The well-known universal approximation theorem \cite{cybenko,hornik,leshno,pinkus} asserts that a feedforward neural network with a single hidden layer and a suitable non-polynomial continuous activation function can approximate any real-valued continuous function on a compact set $K\subset \Bbb R^n$ arbitrarily well (in the uniform norm). The standard proofs are largely non-constructive, relying on existence results from functional analysis (often via separation arguments such as the Hahn–Banach theorem). As a consequence, these proofs do not directly yield an explicit approximation procedure, nor do they provide quantitative bounds on the required width (i.e., the number of hidden units) of the approximating network. Although Monico \cite{monico} gave a very elementary argument showing that three hidden layers suffice for universal approximation under a sigmoidal activation, that proof likewise remains non-constructive in the sense that it does not furnish an explicit network construction with quantitative size control.

While the classical universal approximation theorem is often proved by functional-analytic arguments that are non-constructive, a number of constructive approaches have been developed in various settings (e.g., \cite{barron, ito, li, hidalgo, sonoda, yarotsky}). However, “constructive” results in this area typically come with different trade-offs, and no single method uniformly dominates along all axes of interest.

First, many constructions are tailored to particular classes of activation functions. For instance, Barron’s original scheme is closely tied to sigmoidal-type superpositions and to the analytic structure exploited in that framework, and related extensions likewise rely on activation-specific properties. Other shallow-network constructions require additional regularity assumptions on the activation (such as bounded variation or smoothness) in order to implement an explicit approximation mechanism. Second, some approaches impose restrictions either on the geometry/topology of the underlying input set $K$ (e.g., requiring a “tame” representation such as triangulability or a convenient geometric decomposition) or on the regularity of the target function (e.g., smoothness assumptions needed to obtain rates). Third, certain constructive proofs achieve universality only by allowing very large depth, or else by producing extremely wide shallow networks; while mathematically valid, such architectures can be difficult to interpret or to leverage algorithmically.

In this paper we contribute a complementary point in this landscape. We prove a constructive universal approximation theorem for networks with two hidden layers (depth 3 in our notation): for any compact set $K\subset \Bbb R^n$ and any $f\in C(K)$, we explicitly construct a depth-3 network that approximates $f$ arbitrarily well. Rather than reproving the sharp depth-2 (single hidden layer) universality for general non-polynomial activations, we relax the depth by one layer and use the resulting geometric flexibility to implement a Urysohn-style separation principle. Concretely, our second hidden layer acts as a “selector/aggregator” that combines finitely many separation gadgets, making the choice of first-layer parameters more transparent than forcing all separation and aggregation into a single hidden layer. When $K$ is finite, our construction reduces to the sharp depth-2 approximation statement.

Our method is constructive in a geometric/topological sense: the approximating network is produced by a Tietze-style iterative reduction of oscillation, where each step uses Urysohn-type separation lemmas realized by an explicit finite sequence of geometric operations (coverings, separation gadgets, and a symmetrization/averaging procedure) in the ambient $\Bbb R^n$. This framework treats, within a single proof template, both strictly monotone bounded (“sigmoidal”) activations and the ReLU activation—respectively classical and modern standard choices—and it is flexible enough that similar constructions may extend to other activations after suitable modifications. The main drawback is that the resulting width can be very large, so the construction is not intended as an efficient practical algorithm; rather, it highlights a trade-off between small depth and explicit geometric realizability.

We hope that this viewpoint sheds additional insight on constructive approximation by neural networks, and that the separation lemmas developed along the way may be useful beyond the present application.

\section{Preliminaries}
Let $\sigma :\Bbb R\rightarrow \Bbb R$ be any bounded continuous function which is either strictly increasing or strictly decreasing. It serves as an activation function of neural networks and the typical example of it is the sigmoid function $\sigma_{sig}(x)=1/(1+e^{-x})$ or the hyperbolic tangent function $\tanh(x)=(e^x-e^{-x})/(e^x+e^{-x})$. We shall call such $\sigma$ a sigmoidal activation function.

Let $K$ be any compact subset of $\Bbb R^n$, and $C(K)$ be the space of real-valued continuous functions defined on $K$. We equip $C(K)$ with the supremum norm  $||\cdot||_\infty$ so that it becomes a Banach space. To approximate $C(K)$ we introduce the following subspaces as in \cite{monico} :
$$\mathcal{N}_1=\{f\in C(K)| f(x_1,\cdots,x_n)=a_0+a_1x_1+\cdots+a_nx_n\ \textrm{for}\ \textrm{some}\ a_0,\cdots,a_n\in \Bbb R\}$$
$$\mathcal{N}_1^\sigma=\{F\in C(K)| F=\sigma\circ f\ \textrm{for}\ \textrm{some}\ f\in \mathcal{N}_1\}$$
$$\mathcal{N}_{k+1}=\{g\in C(K)| g=a_0+a_1F_1+\cdots+a_mF_m\ \textrm{for}\ \textrm{some}\ F_i\in \mathcal{N}_k^\sigma,a_i\in \Bbb R\}$$
$$\mathcal{N}_{k+1}^\sigma=\{G\in C(K)| G=\sigma\circ g\ \textrm{for}\ \textrm{some}\ g\in \mathcal{N}_{k+1}\}$$
for $k\geq 1$.  These function spaces can be defined for any subset $K\subseteq \Bbb R^n$ and summarized briefly as :
\begin{itemize}
  \item  $\mathcal{N}_1$ : affine functions
  \item  $\mathcal{N}_2$ : neural networks with one hidden layer, i.e. linear combinations of activated affine functions)
  \item  $\mathcal{N}_3$ : neural networks with two hidden layers
\end{itemize}

A simple but important fact we shall often use is that $\mathcal{N}_{k}^\sigma\subset \mathcal{N}_{k+1}$, and if $g_1,g_2\in \mathcal{N}_{k}$, then $a_0+a_1g_1+a_2g_2\in \mathcal{N}_{k}$ for any $a_i\in \Bbb R$.
The assertion of the universal approximation theorem is that $\mathcal{N}_2$ is dense in $C(K)$, and C. Monico \cite{monico} proved that $\mathcal{N}_4$ is dense in $C(K)$.
We shall prove that $\mathcal{N}_3$ is dense in $C(K)$.

Note that for any such $\sigma$ there exist constants $b,c\in \Bbb R$ such that $b+c\sigma$ is a strictly increasing continuous function with image $(0,1)$, and hence $\mathcal{N}_{k}$ is actually equal to $\mathcal{N}_{k}$ obtained by using $b+c\sigma$ instead of $\sigma$. Thus for the proof of $\overline{\mathcal{N}_k}=C(K)$ one may assume that $\sigma$ is any strictly increasing continuous function with image $(0,1)$, as is common in many papers. In the sequel we shall assume it.

The second kind of activation functions in neural networks is the ReLU function $$\tau(x):=\max(0,x)$$ which is also widely used in deep learning. In the same way as above, one can define
$\mathcal{N}_k$ and $\mathcal{N}_k^\tau$ using $\tau$ as activation. We shall also prove that $\mathcal{N}_3$ with $\tau$ as activation is dense in $C(K)$.

Throughout the paper, we fix the following notations. For any $x=(x_1,\cdots,x_n)\in \Bbb R^n$ and any $\varepsilon>0$, define an $n$-dimensional open cube
$$C_\varepsilon(x):=(x_1-\varepsilon, x_1+\varepsilon)\times \cdots\times (x_n-\varepsilon, x_n+\varepsilon),$$
an $n$-dimensional open ball $$B_\varepsilon(x):=\{y\in \Bbb R^n | \ ||x-y||<\varepsilon\},$$ and
an $(n-1)$-dimensional round sphere $$S_\varepsilon(x):=\{y\in \Bbb R^n | \ ||x-y||=\varepsilon\}.$$  The origin $(0,\cdots,0)\in \Bbb R^n$ is denoted by $\bold{o}$.

\section{Urysohn-type separation lemmas}
In view of the Urysohn lemma \cite{munk} asserting that any two disjoint closed subsets in $\Bbb R^n$ can be separated by a continuous function, it is crucial for the proof of $\overline{\mathcal{N}_3}=C(K)$ to show that $\mathcal{N}_3$ has sufficiently many functions enough to separate any two disjoint compact subsets.
Let's start with a point separation lemma. We show that $\mathcal{N}_2$ can separate arbitrarily apart in case of a closed set and a point :
\begin{Lemma}\label{pt-closedset-1}
Let $A$ and $\{\bold{p}\}$ be nonempty disjoint closed subsets in $\Bbb R^n$. Then for any $\epsilon>0$ there exists $h\in \mathcal{N}_{2}$ such that
$h(\bold{p})<\epsilon$, $h>1-\epsilon$ on $A$, and $h(\Bbb R^n)\subseteq (0,1)$.
\end{Lemma}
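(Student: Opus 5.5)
The plan is to build $h$ as a normalized average of ``ridge pairs'' $\sigma(k(u\cdot(x-\bold{p})-c))+\sigma(k(-u\cdot(x-\bold{p})-c))$ over many directions $u$. Each pair is small near $\bold{p}$ and close to $1$ wherever $|u\cdot(x-\bold{p})|$ is not tiny. As in Section 2, assume $\sigma$ is strictly increasing with image $(0,1)$.

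First, since $A$ is closed and $\bold{p}\notin A$, fix $r>0$ with $B_r(\bold{p})\cap A=\emptyset$; it then suffices to work on $\{x:\|x-\bold{p}\|\ge r\}$. For a unit vector $u$ and $t=u\cdot(x-\bold{p})$ set
$$\phi_u(x)=\sigma(k(t-c))+\sigma(k(-t-c)),$$
which lies in $\mathcal{N}_2$. At most one of $t-c$ and $-t-c$ is positive, and the other is then $\le -c$. Hence $0<\phi_u<1+\sigma(-kc)$ everywhere and $\phi_u(\bold{p})=2\sigma(-kc)$. If moreover $|t|\ge 2c$, then $\phi_u(x)\ge\sigma(kc)$.

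Second, choose finitely many unit vectors $u_1,\dots,u_m$ such that, for every unit vector $v$, at most $\delta m$ of them satisfy $|u_i\cdot v|<2c/r$. For $n=1$ take $u_1=1$ with $2c<r$, so the condition is vacuous. For $n\ge2$, first fix the proportion $\delta$, then take $c$ small relative to $r$, then a sufficiently fine, nearly equidistributed finite net on the sphere $S_1(\bold{o})$. This works because the band $\{u:|u\cdot v|<2c/r\}$ has normalized surface measure tending to $0$ uniformly in $v$ as $c/r\to0$, and a fine net counts points in a thickened band proportionally to its measure. This uniform-in-$v$ counting is the step needing the most care. I would first try a quantitative equidistribution of a grid pulled back to the sphere; failing that, a compactness argument over $v\in S_1(\bold{o})$.

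Third, define
$$h=\frac{1}{m(1+\sigma(-kc))}\sum_{i=1}^m\phi_{u_i}\in\mathcal{N}_2 .$$
By the first step, $0<h<1$ on all of $\Bbb R^n$ and $h(\bold{p})\le 2\sigma(-kc)$. For $x\in A$, write $v=(x-\bold{p})/\|x-\bold{p}\|$. Whenever $|u_i\cdot v|\ge 2c/r$ we get $|u_i\cdot(x-\bold{p})|\ge 2c$, since $\|x-\bold{p}\|\ge r$. So at least $(1-\delta)m$ terms are $\ge\sigma(kc)$, giving
$$h(x)\ge\frac{(1-\delta)\sigma(kc)}{1+\sigma(-kc)}.$$
Choose $\delta<\epsilon/2$ and then $c$ and the directions as in the second step. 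Finally let $k\to\infty$: then $\sigma(-kc)\to0$ and $\sigma(kc)\to1$, so for large $k$ we get $h(\bold{p})<\epsilon$ and $h>1-\epsilon$ on $A$. The unboundedness of $A$ causes no trouble, since all the estimates are uniform in $\|x-\bold{p}\|\ge r$.
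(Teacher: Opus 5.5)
Your proof is correct. At its core it uses the same mechanism as the paper: average a two-sided sigmoidal ridge over many directions, so that for $x$ away from $\bold{p}$ only a small fraction of directions is nearly orthogonal to $x-\bold{p}$. Indeed, the paper's $\bar\Psi(x)=\int_{SO(n)}\sum_j\psi(\langle R^{T}e_j,x\rangle)\,d\mu(R)$, with $e_j$ the standard basis, is just $n$ times the uniform average of $\psi(u\cdot x)$ over unit vectors $u$.

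The execution differs, though. The paper forms the exact Haar average $\bar\Psi$, bounds the bad set $O_{\bold{v}}\subset SO(n)$ via the Riemannian submersion onto the sphere, and then discretizes by a Riemann sum over a triangulation of $SO(n)$. This has two consequences:
\begin{itemize}
\item The discretization needs only uniform continuity, so any fine triangulation works.
\item The Riemann-sum error is controlled only on compact balls, so radial monotonicity of $\tilde\Psi$ is needed to reach the unbounded set $A$. Also, the required fineness goes through the modulus of continuity of $\Psi$, so it depends on the steepness and on $d(\bold{p},A)$.
\end{itemize}
You bypass the continuous average and impose a scale-invariant counting condition on finitely many directions. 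As a result, unbounded $A$ costs nothing and no Lie-group machinery is needed. Moreover, the number of directions depends only on $n$ and $\epsilon$, and the steepness $k$ is chosen last, after all geometric choices.

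The price is your Step 2, which you rightly flag. It becomes routine once you drop equal weights, which $\mathcal{N}_2$ does not require. Partition the unit sphere into pieces $V_j$ of diameter less than $\eta:=2c/r$, pick $u_j\in V_j$, and set $h=\frac{1}{1+\sigma(-kc)}\sum_j w_j\phi_{u_j}$, with $w_j$ the normalized area of $V_j$; this mirrors the paper's weights $\mu(V_i)$. If $|u_j\cdot v|<\eta$, then $V_j\subset\{u:|u\cdot v|<2\eta\}$. Hence the total weight of bad directions is at most the normalized area of that band, which tends to $0$ with $\eta$ uniformly in $v$. No equidistribution or compactness argument is then needed.
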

\begin{proof}
Since $A$ and  $\{\bold{p}\}$ are disjoint and closed, $$d(\bold{p},A):=\inf\{||x-\bold{p}||\ |\ x\in A\}>0.$$
(If $d(\bold{p},A)$ were zero, there would exist a sequence $\bold{p}_1, \bold{p}_2,\cdots$ in $A$ such that $$||\bold{p}_i-\bold{p}||<\frac{1}{i}.$$ This means that the sequence converges to $\bold{p}$. Since $A$ is closed, the limit point $\bold{p}$ must be in $A$, which is a contradiction.)

Let $\delta\in (0,d(\bold{p},A)/2)$.
Without loss of generality (WLOG) we may assume that $\epsilon<1/2$. We shall construct such a separating function as a hole-like function around $\bold{p}\in \Bbb R^n$. To cook it up, let's start with a simple 1-D example. Fix an integer $N$ sufficiently large so that
\begin{eqnarray}\label{mywife}
\frac{1-\frac{\epsilon}{N}}{1+\frac{\epsilon}{2nN}}>1-\epsilon.
\end{eqnarray}
(As $N\rightarrow \infty$, the LHS which is smaller than 1 for any $N>0$ tends to 1.) Since $$\lim_{x\rightarrow -\infty}\sigma(x)=0 \  \  \  \  \textrm{and}\  \  \  \ \lim_{x\rightarrow \infty}\sigma(x)=1,$$ one can take a (large) constant $c>0$ such that $$\sigma(c(x-\frac{3\delta}{2}))<\frac{\epsilon}{2nN}\ \ \  \textrm{on}\ (-\infty,\delta] \ \ \  \textrm{and}\ \ \  \sigma(c(x-\frac{3\delta}{2}))>1-\frac{\epsilon}{3N}\ \ \  \textrm{on}\ [2\delta, \infty).$$ Define $\psi:\Bbb R\rightarrow (0,1+\frac{\epsilon}{2nN})$ by $$\psi(x)=\sigma(c(x-\frac{3\delta}{2}))+\sigma(c(-x-\frac{3\delta}{2})).$$

\begin{figure}[htbp]
  \centering
  \includegraphics[width=0.9\textwidth]{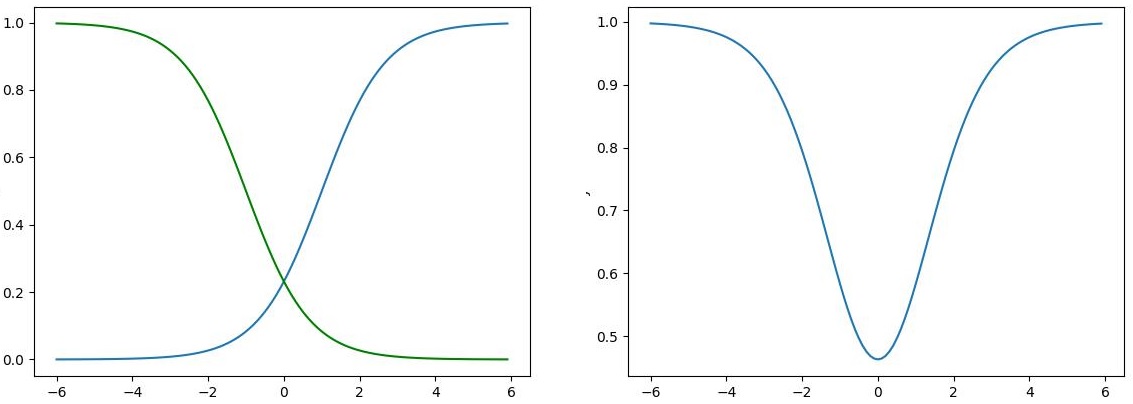}
  \caption{$\sigma(c(x-\frac{3\delta}{2})), \sigma(c(-x-\frac{3\delta}{2}))$ and $\psi$}
  \label{fig:myphoto-1}
\end{figure}

Here the upper bound of the even function $\psi$ is obtained from $$\psi(x)\leq \sigma(c(x-\frac{3\delta}{2}))+\sigma(c(-0-\frac{3\delta}{2}))< 1+\frac{\epsilon}{2nN}\ \ \textrm{for}\ x\geq 0,$$ since $\sigma(c(-x-\frac{3\delta}{2}))$ is strictly decreasing. (When $\sigma$ is $\sigma_{sig}$, one can directly compute that $\psi(x)$ is actually less than $1$ for any $x$, but the above bound is sufficient for our purpose.)


This $\psi$ is the desired hole-like function around $0$, but in higher dimension $n$ we need a modification. First we define an $n$-dimensional function $$\Psi:\Bbb R^n\rightarrow (0,n(1+\frac{\epsilon}{2nN}))$$ by $$\Psi(x):=\Sigma_{i=1}^n\psi(x_i)$$ which is less than $2n\cdot\frac{\epsilon}{2nN}=\frac{\epsilon}{N}$ on $C_\delta(\bold{o})$ and greater than $1-\frac{\epsilon}{3N}$ on $C_{2\delta}(\bold{o})^c$. Note that $\Psi$ is
strictly increasing w.r.t. the radial distance, i.e.
$$\Psi(\lambda x) > \Psi(x) > \Psi(\bold{o})$$
for any $\lambda>1$ and any $x\ne\bold{o}$, because $$\psi(\lambda x_i)\geq \psi(x_i)\geq \psi(0)$$ for all $i$ and there exists at least one nonzero $x_j$ so that $$\psi(\lambda x_j)> \psi(x_j)> \psi(0).$$
But $\Psi$ is far from being radially symmetric. So even if $x$ is far away from $C_{2\delta}(\bold{o})$, for $x$ in a thin set $$S:=\cup_{i=1}^n\pi_i^{-1}((-2\delta,2\delta))$$ where $\pi_i:\Bbb R^n\rightarrow \Bbb R$ is the $i$-th projection map, the lower bound $1-\frac{\epsilon}{3N}$ of $\Psi(x)$ is not big enough in compared to $\sup_x\Psi(x)\sim n$.
Below is drawn a thin set $S$ when $n$ is 2, and $S$ is the union of $n$ slabs in higher dimension $n$.
\begin{figure}[htbp]
  \centering
  \includegraphics[width=0.3\textwidth]{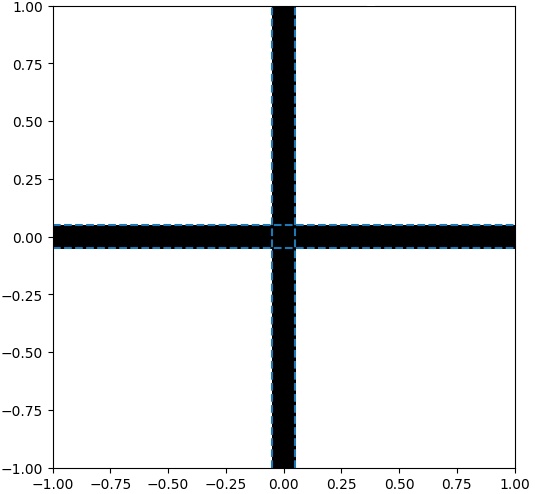}
  \caption{$\cup_{i=1}^n\pi_i^{-1}((-2\delta,2\delta))$ when $n=2$ and $2\delta=0.05$}
  \label{fig:myphoto-1}
\end{figure}

Outside the thin set, $\Psi$ is greater than $n(1-\frac{\epsilon}{3N})$. To exploit this nice property, we shall take an average of $\Psi$, which will make it almost radially symmetric and raise the values on the thin set $S$ sufficiently high. First consider the integral $$\bar{\Psi}(x):=\int_{SO(n)}\Psi(R(x))\ d\mu(R)$$ defined using a bi-invariant metric on a compact Lie group $SO(n)$ with the induced probability Haar measure $\mu$. The integration is over a compact set, so $\bar{\Psi}$ is a continuous function on $\Bbb R^n$ and radially symmetric by the left invariance of $\mu$. Moreover it is strictly increasing w.r.t. the radial distance, because
\begin{eqnarray*}
\bar{\Psi}(\lambda x) &=& \int_{SO(n)}\Psi(R(\lambda x))\ d\mu(R)\\ &=& \int_{SO(n)}\Psi(\lambda R(x))\ d\mu(R)\\ &>& \int_{SO(n)}\Psi(R(x))\ d\mu(R) \\
&=& \bar{\Psi}(x)\\ &>& \bar{\Psi}(\bold{o})=\Psi(\bold{o})
\end{eqnarray*}
for any $\lambda>1$ and any $x\ne\bold{o}$. The desired estimation of the lower bound of $\bar{\Psi}$ on $B_{d(\bold{p},A)}(\bold{o})^c$ comes from the small volume of $S$ and the transitivity of the $SO(n)$ action on each sphere $S_r(\bold{o})$. We will make it precise below using a little knowledge of Riemannian  geometry.

For $\bold{v}:=(d(\bold{p},A),0,\cdots,0)\in \Bbb R^n$, define
$$O_{\bold{v}}:=\{R\in SO(n)| R(\bold{v})\in S\}.$$
Note that the canonical isometric action of $SO(n)$ on $S_{||\bold{v}||}(\bold{o})$ is transitive, whose round metric is the normal homogeneous metric induced from a bi-invariant metric of $SO(n)$. With these metrics the quotient map $$\pi:SO(n)\rightarrow S_{||\bold{v}||}(\bold{o})$$ of the $SO(n)$ action becomes a Riemannian submersion with totally geodesic fibers isometric to $SO(n-1)$ with a bi-invariant metric. (\cite{Besse}) Since $\pi_*$ restricted to each horizontal space, i.e. the orthogonal complement to each tangent space of fiber, is isometric,  the volume $\mu(\pi^{-1}(U))$ for any open subset $U\subseteq S_{||\bold{v}||}(\bold{o})$ is equal to
$$\textrm{Vol}(U)\cdot \textrm{Vol}(SO(n-1))$$ where $\textrm{Vol}(\cdot)$ denotes volume computed using the corresponding metric.
Then there exists a constant $C'=C'(n,d(\bold{p},A))>0$ independent of $\delta$ such that
\begin{eqnarray*}
\mu(O_{\bold{v}}) &=& \mu(SO(n))\frac{\textrm{Vol}(S_{||\bold{v}||}(\bold{o})\cap S)}{\textrm{Vol}(S_{||\bold{v}||}(\bold{o}))}\\
&=& \mu(SO(n))\frac{n\int_{-2\delta}^{2\delta}(||v||^2-t^2)^{\frac{n-2}{2}}|S^{n-2}|\ dt}{\textrm{Vol}(S_{||\bold{v}||}(\bold{o}))}\\
&<& \mu(SO(n))\frac{n\int_{-2\delta}^{2\delta}(||v||^2)^{\frac{n-2}{2}}|S^{n-2}|\ dt}{\textrm{Vol}(S_{||\bold{v}||}(\bold{o}))}\\
&<& C'\delta
\end{eqnarray*}
where $|S^{n-2}|$ denotes the volume of the unit $S^{n-2}$. Therefore
\begin{eqnarray*}
\bar{\Psi}(\bold{v})&>& \int_{SO(n)-O_{\bold{v}}}\Psi(R(\bold{v}))\ d\mu(R)\\ &>& \int_{SO(n)-O_{\bold{v}}} n(1-\frac{\epsilon}{3N})\ d\mu(R) \\
&>& (1-C'\delta) n(1-\frac{\epsilon}{3N})\\ &>&  n(1-\frac{\epsilon}{2N})
\end{eqnarray*}
for any sufficiently small $\delta>0$. Taking such $\delta$, we have that for any $x\in B_{d(\bold{p},A)}(\bold{o})^c$
\begin{eqnarray}\label{unipole}
\bar{\Psi}(x)> n(1-\frac{\epsilon}{2N}).
\end{eqnarray}

However $\bar{\Psi}$ is not in $\mathcal{N}_{2}$ in general, so we should approximate it by a Riemann sum $\tilde{\Psi}$ which can serve as our hole-like function in $\mathcal{N}_{2}$.
Take a triangulation $\mathcal{P}_\varepsilon$ of $SO(n)$ which is a smooth manifold of dimension $\frac{n(n-1)}{2}$ such that the diameter of each $\frac{n(n-1)}{2}$-simplex $V_i$ of $\mathcal{P}_\varepsilon$ is less than $\varepsilon>0$ and define $\tilde{\Psi}:\Bbb R^n\rightarrow \Bbb R$ by $$\tilde{\Psi}(x):=\sum_{i=1}^{L}\Psi(R_i(x))\mu(V_i)$$ where $V_1,\cdots, V_L$ are all $\frac{n(n-1)}{2}$-simplices of $\mathcal{P}_\varepsilon$ and $R_i$ is any point in $V_i$.
Since each $R_i$ is a linear map, $\tilde{\Psi}$ belongs to $\mathcal{N}_{2}$ just as $\Psi$.
In the similar way to $\bar{\Psi}$, $\tilde{\Psi}$ is also strictly increasing w.r.t. the radial distance, and almost radially symmetric in the sense that
$$||\tilde{\Psi}-\bar{\Psi}||_{\infty,R}:=\sup \{|\tilde{\Psi}(x)-\bar{\Psi}(x)|\ |\ x\in \overline{B_R(\bold{o})}\}$$ for any $R>0$ can be made arbitrarily small by taking sufficiently fine $\mathcal{P}_\varepsilon$. (Considering $\Psi(R(x))$ as a function of $(R,x)$ in a compact space $SO(n)\times \overline{B_R(\bold{o})}$, it is uniformly continuous. That's why the Riemann sum $\tilde{\Psi}(x)$ over $SO(n)$ converges uniformly on $\overline{B_R(\bold{o})}$ to the integral $\bar{\Psi}(x)$, as $\varepsilon\rightarrow 0$.)

Certainly $\tilde{\Psi}$ takes values in $(0,n(1+\frac{\epsilon}{2nN}))$, since it is the average of $L$ values in $(0,n(1+\frac{\epsilon}{2nN}))$. Moreover $\tilde{\Psi}$ is less than $\epsilon/N$ on $B_\delta(\bold{o})$, since each $R_i(B_\delta(\bold{o}))$ is equal to $B_\delta(\bold{o})$.
What about $\tilde{\Psi}$ on $B_{d(\bold{p},A)}(\bold{o})^c$? We need it to satisfy
\begin{eqnarray}\label{dipole}
\tilde{\Psi}(x)> n(1-\frac{\epsilon}{N})\ \ \textrm{for}\ x\in B_{d(\bold{p},A)}(\bold{o})^c.
\end{eqnarray}

We claim that (\ref{dipole}) can be attained by taking sufficiently small $\delta>0$ first and then a sufficiently fine triangulation $\mathcal{P}_\varepsilon$ for $\varepsilon\ll \delta$.
By the radially increasing property of $\tilde{\Psi}$, it's enough to check it for $x\in  S_{d(\bold{p},A)}(\bold{o})$.
Indeed we have
\begin{eqnarray*}
\tilde{\Psi}(x)&\geq& \bar{\Psi}(x)-|\tilde{\Psi}(x)-\bar{\Psi}(x)|\\ &>&  n(1-\frac{\epsilon}{2N})-||\tilde{\Psi}-\bar{\Psi}||_{\infty,d(\bold{p},A)}\\ &>& n(1-\frac{\epsilon}{N})
\end{eqnarray*}
for $\delta$ sufficiently small enough to satisfy (\ref{unipole}) and also $\varepsilon\ll \delta$ sufficiently small enough to have $||\tilde{\Psi}-\bar{\Psi}||_{\infty,d(\bold{p},A)}<\frac{n\epsilon}{2N}$.

We now define $h:\Bbb R^n\rightarrow \Bbb (0,1)$ by $$h(x):=\frac{1}{n(1+\frac{\epsilon}{2nN})}\tilde{\Psi}(x-\bold{p})$$ which is greater than
$$\frac{n(1-\frac{\epsilon}{N})}{n(1+\frac{\epsilon}{2nN})}\ \ \ \ \textrm{on}\  B_{d(\bold{p},A)}(\bold{p})^c\supset A$$ and less than $$\frac{\epsilon}{N}\frac{1}{n(1+\frac{\epsilon}{2nN})}<\epsilon\ \ \ \ \textrm{on}\   B_\delta(\bold{p}).$$
Since $N$ was chosen so that (\ref{mywife}) holds, $h|_A> 1-\epsilon.$

\end{proof}

By superposing those hole-like functions constructed in the above lemma, one can separate two disjoint compact sets.
\begin{Lemma}\label{monicothm-1}
Let $A$ and $B$ be nonempty disjoint closed subsets in $\Bbb R^n$ so that $B$ is compact. Then for any $\epsilon>0$ there exist $b \in [\epsilon,1-\epsilon)$ and $H\in \mathcal{N}_{3}$ for $\sigma$ activation such that
$H<b$ on $B$, $H>1-\epsilon$ on $A$, and $H(\Bbb R^n)\subseteq (0,1)$.
\end{Lemma}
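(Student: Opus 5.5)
The plan is to glue together finitely many hole-like functions from Lemma \ref{pt-closedset-1}, one for each member of a finite cover of $B$, and to aggregate them in the second hidden layer. Since $\mathcal{N}_3$ has only one more hidden layer than $\mathcal{N}_2$, the aggregation must use a single application of $\sigma$ to each first-layer function, followed by one affine combination. I expect the crude aggregation below to produce a threshold $b$ of order $1-1/m$, where $m$ is the size of the cover; that is the step I flag as the main obstacle.

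\textbf{Step 1: cover $B$.} For each $\bold{p}\in B$ apply Lemma \ref{pt-closedset-1} to $A$ and $\{\bold{p}\}$ with the given $\epsilon$ (or any fixed $\epsilon_0\le\epsilon$). This gives $h_{\bold{p}}\in\mathcal{N}_2$ with $h_{\bold{p}}(\bold{p})<\epsilon_0$, $h_{\bold{p}}>1-\epsilon_0$ on $A$, and $h_{\bold{p}}(\Bbb R^n)\subseteq(0,1)$. By continuity, $h_{\bold{p}}<\epsilon_0$ on an open neighborhood $U_{\bold{p}}$ of $\bold{p}$. Compactness of $B$ gives finitely many points $\bold{p}_1,\dots,\bold{p}_m$ with $B\subseteq U_{\bold{p}_1}\cup\cdots\cup U_{\bold{p}_m}$. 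Write $h_i:=h_{\bold{p}_i}$. Then every $x\in B$ has some $i$ with $h_i(x)<\epsilon_0$, and every $x\in A$ has $h_i(x)>1-\epsilon_0$ for all $i$.

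\textbf{Step 2: sharpen in the second layer.} A direct sum $\sum_i h_i$ would need the margin $m\epsilon_0<1$, which is circular, because $m$ depends on $\epsilon_0$. To decouple, set
$$g_i:=\sigma\big(c(h_i-\tfrac12)\big)\in\mathcal{N}_2^\sigma,$$
with $c$ chosen \emph{after} $m$ is known. Since $\epsilon_0<1/2$, on $A$ we have $h_i-\frac12>\frac12-\epsilon_0>0$. On $U_{\bold{p}_i}$ we have $h_i-\frac12<-(\frac12-\epsilon_0)$. So for any $\eta>0$, a large enough $c$ gives $g_i>1-\eta$ on $A$ for all $i$, and $g_i<\eta$ on $U_{\bold{p}_i}$, while $g_i(\Bbb R^n)\subseteq(0,1)$.

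\textbf{Step 3: aggregate.} Define
$$H:=\frac1m\sum_{i=1}^m g_i\in\mathcal{N}_3 .$$
Then $H(\Bbb R^n)\subseteq(0,1)$ and $H>1-\eta$ on $A$. On $B$ at least one summand is below $\eta$ and the rest are below $1$, so $H<b:=\frac{m-1+\eta}{m}$. Taking $\eta<\epsilon$ gives $H>1-\epsilon$ on $A$ and $b\ge\epsilon$.

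\textbf{Main obstacle.} The step I expect to be hardest is securing $b<1-\epsilon$. With the plain average this requires $m<(1-\eta)/\epsilon$, and $m$ is produced by the covering, not controlled by $\epsilon$. This is the same circularity as in Step 2, now appearing in the threshold instead of the margin.

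\begin{itemize}
\item The first thing I would try is to make $m$ independent of $\epsilon$ by using a cover adapted to $d(A,B)>0$ rather than to the tolerance. Concretely, I would reopen the proof of Lemma \ref{pt-closedset-1} with the fixed radius $d(A,B)$ and check whether the neighborhood on which $h_{\bold{p}}$ is small can be taken of size independent of $\epsilon$, letting the steepness $c$, rather than $\delta$, carry the dependence on the tolerance.
\item Failing that, I would take the $A$-tolerance $\eta\ll\epsilon/m$ and exploit the freedom $b\in[\epsilon,1-\epsilon)$ differently. For instance, I would use weighted or signed combinations $a_0+\sum_i a_ig_i$ that still map $\Bbb R^n$ into $(0,1)$, aiming to move the $B$-threshold away from $1$.
\end{itemize}
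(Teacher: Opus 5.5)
Your Steps 1--3 are essentially the paper's construction, but the proposal is not yet a proof. The inequality $b<1-\epsilon$ is left as an open ``main obstacle'', and neither remedy you list is carried out. The circularity you describe comes from Step 1, where you tie the covering tolerance to the target by requiring $\epsilon_0\le\epsilon$. Your own Step 2 already removes the need for this: $g_i=\sigma(c(h_i-\frac12))$ sharpens correctly for \emph{any} fixed $\epsilon_0<\frac12$, and $c$, chosen after the cover, carries all the dependence on $\epsilon$. The missing step, and exactly what the paper does, is to apply Lemma \ref{pt-closedset-1} once with the fixed tolerance $\frac13$. The neighborhoods, and hence their number $m$ (the paper's $N$), then depend only on $A$ and $B$. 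There is no need to reopen the proof of Lemma \ref{pt-closedset-1}, as your first bullet suggests.

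With $m$ independent of $\epsilon$, the paper assumes without loss of generality that $\epsilon<1/(m+1)$. This is all the main theorem needs, since it only uses that the bound on $A$ exceeds the bound on $B$. It then sharpens with $\sigma(s+th_i)$, where $s+\frac t3=\sigma^{-1}(\epsilon)$ and $s+\frac{2t}3=\sigma^{-1}(1-\epsilon)$; this plays the role of your $g_i$ with $\eta=\epsilon$. The result is $b=\frac{m-1+\epsilon}{m}$, for which $b<1-\epsilon$ is equivalent to $(m+1)\epsilon<1$. Also $b\ge\epsilon$ holds even when $m=1$, whereas your choice $\eta<\epsilon$ with $m=1$ gives $b=\eta<\epsilon$.

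Your second bullet can also be completed, and it gives the statement for every $\epsilon<\frac12$ with no reduction, even with your $\epsilon$-dependent $m$:
\begin{itemize}
\item[(i)] Choose $\eta<1/(m+1)$. Then $G:=\frac1m\sum_i g_i$ satisfies $G>1-\eta$ on $A$ and $G<\beta:=1-\frac{1-\eta}{m}<1-\eta$ on $B$.
\item[(ii)] Fix $\gamma\in(\beta,1-\eta)$ and set $H:=(1-\epsilon)+\lambda(G-\gamma)\in\mathcal{N}_3$.
\item[(iii)] For $\lambda>0$ small enough you get $H(\Bbb R^n)\subseteq(0,1)$ and $H>1-\epsilon$ on $A$.
\item[(iv)] On $B$ you get $H<b:=1-\epsilon-\lambda(\gamma-\beta)$, with $b\in[\epsilon,1-\epsilon)$.
\end{itemize}
For $\epsilon\ge\frac12$ the interval $[\epsilon,1-\epsilon)$ is empty, so the lemma only makes sense for $\epsilon<\frac12$.
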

\begin{proof}
At each point $\bold{p}\in B$, there exists $h_{\bold{p}}\in \mathcal{N}_{2}$ such that
$$0<h_{\bold{p}}(\bold{p})<1/3 \ \ \ \ \textrm{and} \ \ \ \ 2/3<h_{\bold{p}}<1\ \ \textrm{on}\ A$$ and $h_{\bold{p}}(\Bbb R^n)\subseteq (0,1)$ by the above lemma. Choose an open ball $B(\bold{p})$ centered at $\bold{p}$ such that $h_{\bold{p}} < 1/3$ on $B(\bold{p})$. Thus we get an open cover $\cup_{\bold{p}\in B} B(\bold{p})$ of $B$ and hence a finite subcover $\cup_{i=1}^N B(\bold{p}_i)$ by the compactness of $B$.

WLOG we may assume that $\epsilon < 1/(N+1)$. Take $s, t \in \Bbb R$ such that
$$s+t\cdot \frac{1}{3} =\sigma^{-1}(\epsilon)\ \ \ \ \textrm{and} \ \ \ \ s+t\cdot \frac{2}{3}=\sigma^{-1}(1-\epsilon).$$
Define $H:\Bbb R^n\rightarrow (0,1)$ by $$H(x)=\frac{1}{N}\Sigma_{i=1}^N\sigma(s+th_{\bold{p}_i}(x)).$$ Then $H\in \mathcal{N}_{3}$ satisfies $$H>1-\epsilon\ \ \textrm{on}\ A\ \ \ \  \textrm{and} \ \  \ \ H<\frac{1}{N}(\epsilon+(N-1)\cdot 1)\ \ \textrm{on}\ B$$ where the 2nd inequality is obtained from the fact that any point in $B$ belongs to at least one of $B(\bold{p}_1),\cdots, B(\bold{p}_N)$. By simple computation using $\epsilon < 1/(N+1)$, $$1-\epsilon> \frac{\epsilon}{N}+1-\frac{1}{N}\geq \epsilon$$ and hence we can put $a:=1-\epsilon$ and $b:= \frac{\epsilon}{N}+1-\frac{1}{N}$.
\end{proof}


Similar lemmas can be obtained for ReLU activation too, and we need to construct a hole-like function in $\mathcal{N}_{2}$ with $\tau$ activation. As before we start with an 1-D example. For any positive constants $b,c$, and $b'>b$, let $\varphi_{b,c}:\Bbb R\rightarrow [0,\infty)$ and $\varphi_{b,c,b'}:\Bbb R\rightarrow [0,\infty)$ be defined by
$$\varphi_{b,c}(x):=\tau(c(x-b))+\tau(c(-x-b)), \ \ \ \ \ \ \varphi_{b,c,b'}(x):= \varphi_{b,c}(x)-\varphi_{b',c}(x).$$

\begin{figure}[htbp]
  \centering
  \includegraphics[width=\textwidth]{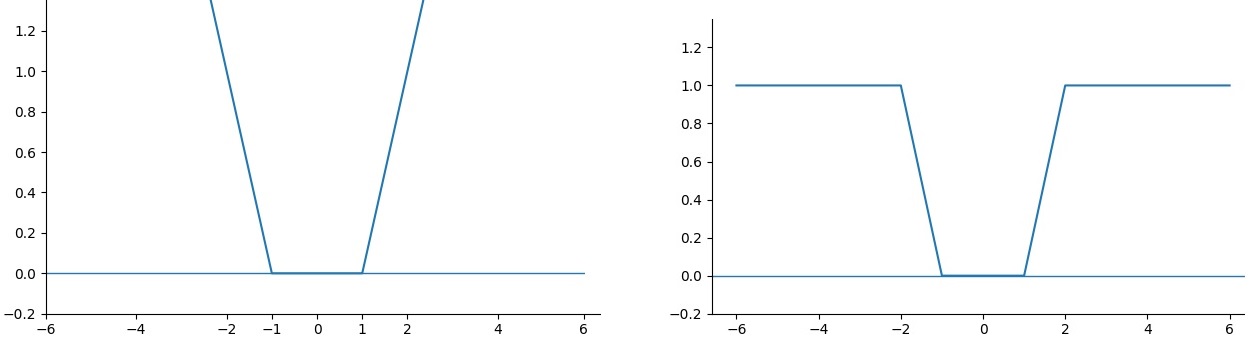}
  \caption{$\varphi_{1,1}$ and $\varphi_{1,1,2}$}
  \label{fig:myphoto-2}
\end{figure}
So in the graph of $\varphi_{b,c,b'}$, $c$ is the slope of the right-hand inclined line, $2b$ is the length of the bottom line, and $c(b'-b)$ is the height.
We use this $\varphi_{b,c,b'}$ to make an $n$-dimensional hole-like function and prove a point separation lemma for $\tau$ activation :
\begin{Lemma}\label{pt-closedset-2}
Let $A$ and $\{\bold{p}\}$ be nonempty disjoint closed subsets in $\Bbb R^n$. Then for any $\epsilon>0$ there exists $h\in \mathcal{N}_{2}$ with $\tau$ activation such that $h(\Bbb R^n)\subseteq [0,1]$, $h>1-\epsilon$ on $A$, and $h=0$ in an open neighborhood of $\bold{p}$.
\end{Lemma}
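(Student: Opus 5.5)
The plan is to mimic the proof of Lemma \ref{pt-closedset-1}, replacing the sigmoidal bump $\psi$ by the ReLU trapezoid $\varphi_{b,c,b'}$. Let $d:=d(\bold{p},A)>0$, and fix small constants $0<b<b'$. Take $c:=1/(b'-b)$, so that $\varphi_{b,c,b'}$ has height exactly $1$. Then $\varphi_{b,c,b'}$ has the following properties:
\begin{itemize}
\item it vanishes on $[-b,b]$;
\item it equals $1$ on $\{|x|\geq b'\}$;
\item it takes values in $[0,1]$;
\item it is even and nondecreasing in $|x|$.
\end{itemize}
Define $\Psi(x):=\sum_{i=1}^n\varphi_{b,c,b'}(x_i)$. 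This is a sum of $4n$ ReLUs of affine functions, so it lies in $\mathcal{N}_2$ with $\tau$ activation. Moreover $\Psi$ takes values in $[0,n]$, vanishes on $C_b(\bold{o})$, and equals $n$ outside the thin set $S:=\cup_i\pi_i^{-1}((-b',b'))$. It is also radially nondecreasing: $\Psi(\lambda x)\geq\Psi(x)$ for $\lambda\geq1$. Only weak monotonicity is available here, but that is all we need.

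Next we symmetrize exactly as before. Let $\bar\Psi(x):=\int_{SO(n)}\Psi(R x)\,d\mu(R)$. For $\bold{v}$ with $||\bold{v}||=d$, the Riemannian submersion computation of Lemma \ref{pt-closedset-1} gives $\mu(\{R: R\bold{v}\in S\})<C'b'$, with $C'=C'(n,d)$ independent of $b'$. Since $\Psi\geq0$ everywhere and $\Psi=n$ off $S$, we get $\bar\Psi(\bold{v})\geq n(1-C'b')$ on $S_d(\bold{o})$. We then choose $b'$ (and $b<b'$) so small that $n(1-C'b')>n(1-\epsilon/2)$.

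We then pass to a Riemann sum $\tilde\Psi(x):=\sum_{j=1}^L\Psi(R_jx)\mu(V_j)$ over a fine triangulation of $SO(n)$. This function inherits the following properties:
\begin{itemize}
\item it lies in $\mathcal{N}_2$, since each $R_j$ is linear;
\item it takes values in $[0,n]$, being a convex combination of values of $\Psi$;
\item it vanishes on $B_b(\bold{o})$, since $R_j(B_b(\bold{o}))=B_b(\bold{o})\subset C_b(\bold{o})$;
\item it is radially nondecreasing.
\end{itemize}
By uniform continuity of $(R,x)\mapsto\Psi(Rx)$ on $SO(n)\times\overline{B_d(\bold{o})}$, a sufficiently fine triangulation gives $||\tilde\Psi-\bar\Psi||_{\infty,d}<n\epsilon/2$. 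Hence $\tilde\Psi>n(1-\epsilon)$ on $S_d(\bold{o})$. By radial monotonicity, the same bound holds on all of $B_d(\bold{o})^c$: each $x$ there is $\lambda x'$ with $x'\in S_d(\bold{o})$ and $\lambda\geq1$.

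Finally, set $h(x):=\frac1n\tilde\Psi(x-\bold{p})$. Then $h\in\mathcal{N}_2$, since translation preserves affine maps, and $h(\Bbb R^n)\subseteq[0,1]$. We have $h=0$ on the open neighborhood $B_b(\bold{p})$, and $h>1-\epsilon$ on $B_d(\bold{p})^c\supseteq A$.

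The only delicate step is the measure estimate $\mu(O_{\bold{v}})<C'b'$ together with the order of choices: first fix $b'$ (hence $b$ and $c$), then the mesh $\varepsilon$. This is, however, literally the argument already carried out in Lemma \ref{pt-closedset-1}, with $2\delta$ replaced by $b'$. The ReLU case is in fact simpler, since no normalization constant $N$ is needed: the exact height $1$ and the exact vanishing of $\varphi_{b,c,b'}$ on $[-b,b]$ do that work.
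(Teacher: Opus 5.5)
Your proposal is correct and is essentially the paper's proof: the paper uses the special case $b=\delta$, $b'=2\delta$, $c=1/\delta$ of your trapezoid, sums it over the coordinates, averages over a Riemann sum on $SO(n)$ (fixing $\delta$ before the mesh $\varepsilon$), translates to the given point, and rescales by $1/n$. Your write-up also spells out the estimate chain that the paper only asserts. Your weights $\mu(V_j)$, without the paper's stray factor $1/L$, are the correct normalization for $\tilde\Psi$ to take values in $[0,n]$.
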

\begin{proof}
WLOG we may assume that $\epsilon <1$. The overall strategy of proof and notations follow those of the $\sigma$ case.
Let $\delta\in (0,d(\bold{p},A)/2)$.

When $n=1$, $\varphi_{\delta,\frac{1}{\delta}, 2\delta}(x-\bold{p})$ is a desired function $h$. In higher dimension $n$, we need to modify  $$\Phi(x)=\Sigma_{i=1}^n\varphi_{\delta,\frac{1}{\delta}, 2\delta}(x_i)$$ to make it almost radially symmetric.
As before, define $\tilde{\Phi}:\Bbb R^n\rightarrow [0,n]$ by a Riemann sum $$\tilde{\Phi}(x):=\frac{1}{L}\sum_{i=1}^{L}\Phi(R_i(x))\mu(V_i)$$ according to a triangulation $\mathcal{P}_\varepsilon$, which will be almost radially symmetric for sufficiently small $\varepsilon>0$. On this occasion $\Phi$ and $\tilde{\Phi}$ are 0 on $B_\delta(\bold{o})$, and non-strictly increasing w.r.t. the radial distance. By taking sufficiently small $\delta\gg \varepsilon >0$ we can arrange that $\tilde{\Phi}$ is greater than $n(1-\epsilon)$ on $B_{d(\bold{p},A)}(\bold{o})^c$.

Now define $h:\Bbb R^n\rightarrow \Bbb [0,1]$ by $$h(x):= \frac{1}{n}\tilde{\Phi}(x-\bold{p})$$ which
is greater than $1-\epsilon$ on $B_{d(\bold{p},A)}(\bold{p})^c\supset A$ and  equal to $0$ on $B_\delta(\bold{p})$. Therefore $h$ is a desired function in $\mathcal{N}_{2}$ separating $A$ and $\bold{p}$.
\end{proof}

\begin{Lemma}\label{monicothm-2}
Let $A$ and $B$ be nonempty disjoint closed subsets in $\Bbb R^n$ so that $B$ is compact. Then there exist $b\in [0,1)$ and $H\in \mathcal{N}_{3}$ for $\tau$ activation such that
$H\leq b$ on $B$, $H=1$ on $A$, and $H(\Bbb R^n)\subseteq [0,1]$.
\end{Lemma}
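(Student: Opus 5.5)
The plan follows the proof of Lemma \ref{monicothm-1}, with Lemma \ref{pt-closedset-2} supplying the hole-like functions. The one new point is that ReLU is unbounded, so it cannot saturate at $1$ on its own. I will cap the output with a difference of two ReLUs: for $u\in\Bbb R$,
$$\tau(u)-\tau(u-1)=\min(\max(u,0),1).$$
This clamps $u$ to $[0,1]$ and uses only affine combinations of elements of $\mathcal{N}_2^\tau$.

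First, the local functions. For each $\bold{p}\in B$, Lemma \ref{pt-closedset-2} with $\epsilon=1/2$ gives $h_{\bold{p}}\in\mathcal{N}_2$ such that $h_{\bold{p}}(\Bbb R^n)\subseteq[0,1]$, $h_{\bold{p}}>1/2$ on $A$, and $h_{\bold{p}}=0$ on an open ball $B(\bold{p})$ around $\bold{p}$. By compactness of $B$, finitely many balls $B(\bold{p}_1),\dots,B(\bold{p}_N)$ cover $B$.

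Next, the clamped pieces and their average. Set $u_i:=2h_{\bold{p}_i}$ and
$$g_i:=\tau(u_i)-\tau(u_i-1).$$
Each $\tau(u_i)$ and $\tau(u_i-1)$ lies in $\mathcal{N}_2^\tau$, so $g_i\in\mathcal{N}_3$. Since $u_i\ge0$, we have $g_i=\min(u_i,1)$. Hence $g_i=1$ on $A$, because $u_i>1$ there; $g_i=0$ on $B(\bold{p}_i)$; and $g_i\in[0,1]$ everywhere. Define
$$H:=\frac{1}{N}\sum_{i=1}^N g_i\in\mathcal{N}_3,$$
a linear combination of elements of $\mathcal{N}_2^\tau$. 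Then $H(\Bbb R^n)\subseteq[0,1]$ and $H=1$ on $A$. Each $x\in B$ lies in some $B(\bold{p}_j)$, so one summand vanishes and the others are at most $1$, giving $H(x)\le (N-1)/N$. Taking $b:=(N-1)/N\in[0,1)$ completes the construction. (If $N=1$, then $b=0$.)

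The one point I would verify is the membership $H\in\mathcal{N}_3$. The clamp must be built from two ReLUs applied to the \emph{same} $\mathcal{N}_2$ function. This is legitimate because $u_i-1\in\mathcal{N}_2$, so both $\tau(u_i)$ and $\tau(u_i-1)$ lie in $\mathcal{N}_2^\tau$. No exact equality $H=1$ would be possible without this clamp, which replaces the role played by the choice of $s,t$ in the $\sigma$ case.
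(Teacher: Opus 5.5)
Your proof is correct and is essentially the paper's argument. The paper also takes the hole-like functions from Lemma~\ref{pt-closedset-2} (with threshold $2/3$ where you use $1/2$), covers $B$ by finitely many balls, and averages $N$ clamped copies. Its clamp $\varphi_{\frac{1}{3},3,\frac{2}{3}}$ equals $\tau(3x-1)-\tau(3x-2)$ on $[0,1]$, which is the same two-ReLU ramp as your $\tau(u)-\tau(u-1)$, giving $H=1$ on $A$ and $H\le 1-\frac{1}{N}$ on $B$.
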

\begin{proof}
This is also proved in the similar way to Lemma \ref{monicothm-1}.
At each point $\bold{p}\in B$, there exist $h_{\bold{p}}\in \mathcal{N}_{2}$ and an open ball $B(\bold{p})$ such that
$$h_{\bold{p}}=0 \ \ \textrm{on}\ B(\bold{p}), \ \ \ \ h_{\bold{p}}>2/3 \ \ \textrm{on}\ A, \ \ \ \ h_{\bold{p}}(\Bbb R^n)\subseteq [0,1]$$ by the above lemma. From an open cover $\cup_{\bold{p}\in B} B(\bold{p})$ of $B$ we extract a finite subcover $\cup_{i=1}^N B(\bold{p}_i)$.

Define $H:\Bbb R^n\rightarrow [0,1]$ by $$H(x)=\frac{1}{N}\Sigma_{i=1}^N\varphi_{\frac{1}{3},3,\frac{2}{3}}(h_{\bold{p}_i}(x)).$$
Since $\varphi_{\frac{1}{3},3,\frac{2}{3}}\in \mathcal{N}_{2}$, $H$ belongs to $\mathcal{N}_{3}$.  From the properties
$$\varphi_{\frac{1}{3},3,\frac{2}{3}}=0\ \ \textrm{on}\ [0,1/3], \ \  \ \ \varphi_{\frac{1}{3},3,\frac{2}{3}}=1\ \ \textrm{on}\ [2/3,1], \ \ \ \ \varphi_{\frac{1}{3},3,\frac{2}{3}}(\Bbb R^n)=[0,1]$$ of $\varphi_{\frac{1}{3},3,\frac{2}{3}}$ and those of $h_{\bold{p}_i}$,
it readily follows that $$H=1\ \ \textrm{on}\ A, \ \ \ \ \ \ \ H\leq \frac{1}{N}(0+(N-1)\cdot 1)=1-\frac{1}{N}\ \ \textrm{on}\ B$$ where the inequality is obtained from the fact that any point in $B$ belongs to at least one of $B(\bold{p}_1),\cdots, B(\bold{p}_N)$. Now we can take the desired constant $b$ to be $1-1/N$.
\end{proof}

\section{Main Theorem}

We are now prepared to prove that $\mathcal{N}_3$ with sigmoidal or ReLU activation is dense in $C(K)$.
\begin{Theorem}
Let $K$ be a compact subset of $\Bbb R^n$ and $\mathcal{N}_{k}$ for each $k$ be the previously defined space with $\sigma$ or $\tau$ as activation.
For any $f\in C(K)$ and any $\epsilon>0$, there exists $\hat{f}\in \mathcal{N}_{3}$ such that $||f-\hat{f}||_\infty<\epsilon$.
\end{Theorem}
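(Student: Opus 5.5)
We follow the Tietze extension scheme: shrink the oscillation of the error geometrically, using Lemma \ref{monicothm-1} (for $\sigma$) or Lemma \ref{monicothm-2} (for $\tau$) as Urysohn functions. The key point is that these lemmas give only a \emph{gap}, not an $\epsilon$-close two-valued function: $H$ is large on $A$ and at most $b<1$ on $B$. So each step reduces the sup norm by a fixed factor $\theta<1$, not by $1/3$. We then iterate finitely many times. All partial sums stay in $\mathcal{N}_3$, because $\mathcal{N}_3$ is closed under affine combinations $a_0+a_1g_1+a_2g_2$.

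\textbf{Step 1 (one reduction step).} Let $g\in C(K)$ with $\|g\|_\infty\le M$. Set
\[A=\{x\in K: g(x)\ge M/3\},\qquad B=\{x\in K: g(x)\le -M/3\}.\]
Both are compact and disjoint. If either is empty, use a constant instead. Apply the relevant lemma to the pair $(A,B)$ to get $H\in\mathcal{N}_3$ with $H(\Bbb R^n)\subseteq[0,1]$. In the $\tau$ case we have $H=1$ on $A$ and $H\le b$ on $B$. In the $\sigma$ case we have $H>1-\epsilon_0$ on $A$ and $H<b$ on $B$, with $b<1-\epsilon_0$; for the $\sigma$ case, apply the lemma with some fixed small $\epsilon_0$, say $\epsilon_0<1/4$.

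We cannot rescale $H$ to be exactly $\pm M/3$ on $A$ and $B$. Instead, compose with the affine map sending $b\mapsto -M/3$ and $1-\epsilon_0$ (resp.\ $1$) $\mapsto M/3$, and then clip. Clipping is not available inside $\mathcal{N}_3$, so we use the simpler option: take $u=c(H-\tfrac{1+b}{2})$ with $c$ chosen so that $|u|\le M/3$ everywhere.

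The problem is that on $A$ this $u$ is then only $\ge \kappa M$ for some $\kappa>0$ depending on the gap $1-\epsilon_0-b$. The gap depends on $N$, hence on $A,B$, and hence on $g$. So $\kappa$ is not uniform, which spoils a geometric rate. This is the main obstacle.

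\textbf{Step 2 (fixing uniformity).} In Lemma \ref{monicothm-1} we have $b=1-(1-\epsilon_0)/N$. We need $u$ to equal roughly $M/3$ on $A$, roughly $-M/3$ on $B$, and stay in $[-M/3,M/3]$ on $K$. I would revisit the construction rather than use the lemma as a black box.

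For $\tau$, replace the averaging $\frac1N\sum$ by
\[H'=1-\tau\Bigl(1-\sum_i\bigl(1-\varphi_{\frac13,3,\frac23}(h_{\bold p_i})\bigr)\Bigr).\]
This is still in $\mathcal{N}_3$, since $\tau$ of a combination of second-layer outputs is allowed. On $B$ the inner sum is at least $1$, so $H'=0$. On $A$ every term $1-\varphi_{\frac13,3,\frac23}(h_{\bold p_i})$ vanishes, so $H'=1$. Also $H'\in[0,1]$ everywhere.

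For $\sigma$, use the second-layer neuron
\[H'=\sigma\Bigl(s+t\sum_i \sigma(s'+t'h_{\bold p_i})\Bigr),\]
which is in $\mathcal{N}_3$ as a neuron applied to an $\mathcal{N}_2$ function. Choose $s',t'$ so that the inner neuron is below $\eta/N$ on $A$ (where $h_{\bold p_i}>2/3$) and above $1-\eta$ where $h_{\bold p_i}<1/3$. Then the sum is $<\eta$ on $A$ and $>1-\eta$ on $B$. Finally choose $s,t$ with $t<0$ so that $H'>1-\epsilon_0$ on $A$ and $H'<\epsilon_0$ on $B$. This gives a two-sided $\epsilon_0$-Urysohn function with a uniform gap.

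Then set $u=\frac{2M}{3}(H'-\tfrac12)$. We get $|u|\le M/3$ everywhere, and $g-u$ has norm at most $(\tfrac23+\tfrac{2\epsilon_0}{3})M=:\theta M$ with $\theta<1$. Check this on $A$, on $B$, and on the middle region $|g|<M/3$ separately.

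\textbf{Step 3 (iteration).} Start with $g_0=f$, $M_0=\|f\|_\infty$. Inductively build $u_k\in\mathcal{N}_3$ with $\|g_k-u_k\|\le\theta M_k$, and set $g_{k+1}=g_k-u_k$, $M_{k+1}=\theta M_k$. After $m$ steps with $\theta^m M_0<\epsilon$, take $\hat f=u_0+\cdots+u_{m-1}\in\mathcal{N}_3$. Then $\|f-\hat f\|_\infty<\epsilon$.

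The only delicate point is Step 2: making the selector in the second hidden layer produce a \emph{uniform} two-sided separation, rather than the $1-1/N$ gap of the averaging construction.
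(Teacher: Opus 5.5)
Your Step 2 does not stay in $\mathcal{N}_3$, and this is a genuine gap. Count layers with the paper's definitions. Each $h_{\bold p_i}$ is in $\mathcal{N}_2$, so $\sigma(s'+t'h_{\bold p_i})\in\mathcal{N}_2^\sigma$ is already a neuron of the \emph{second} hidden layer. Hence $s+t\sum_i\sigma(s'+t'h_{\bold p_i})$ lies in $\mathcal{N}_3$, not in $\mathcal{N}_2$, and your $H'$ lies in $\mathcal{N}_3^\sigma\subset\mathcal{N}_4$.

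The $\tau$ selector has the same defect. $\varphi_{\frac13,3,\frac23}(h_{\bold p_i})$ is a combination of the second-layer neurons $\tau(\pm 3h_{\bold p_i}-k)$, $k=1,2$, so it lies in $\mathcal{N}_3$. The outer $\tau\bigl(1-\sum_i(\cdots)\bigr)$ is therefore a third-layer neuron. (That formula also swaps the roles: it equals $0$ on $A$ and $1$ on $B$. This is harmless.)

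So Steps 1--3 as written only show that $\mathcal{N}_4$ is dense, which is the depth of Monico's result. The nonlinear ``AND'' over the cover is exactly what costs the extra layer. The separators of Lemmas \ref{monicothm-1} and \ref{monicothm-2} stay in $\mathcal{N}_3$ only because they combine the second-layer neurons by the \emph{linear} average $\frac1N\sum_i$, which the output layer absorbs.

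Your diagnosis of the obstacle is still accurate. The paper's proof uses that averaged $H$ directly and sets $c=1-\frac{a-b}{3}$ once. It then carries the same $c$ through the induction to obtain $c^{n+1}(M_f-m_f)$. But the lemmas only guarantee $a-b<1/N$, where $N$ is the size of the finite cover of the current set $E^-$. So the contraction factor depends on the step and could drift toward $1$.

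To finish within two hidden layers you would need one of two things. Either find a separator in $\mathcal{N}_3$ whose gap is independent of $N$, or prove that the step-dependent factors $c_k$ still satisfy $\prod_k c_k\to 0$. Your proposal supplies neither. Replacing the average by a thresholding neuron is not available at this depth.
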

\begin{proof}
The idea of proof is inspired by the Tietze extension theorem \cite{munk}, and we can use our separation lemmas just as the Urysohn lemma is used in proving the Tietze's theorem. We prove both cases of activation together, so the following argument works for both $\sigma$ and $\tau$.
For any $g\in C(K)$, we denote $$M_g:=\max\{g(x)|x\in K\},\ \ \ m_g:=\min\{g(x)|x\in K\},\ \ \ w_g:=\frac{M_g-m_g}{3}.$$

WLOG we may assume that $f\notin \mathcal{N}_{3}$ and $m_f<0<M_f$ or equivalently
\begin{eqnarray}\label{COHwang}
-3w_f<m_f<0.
\end{eqnarray}
Since $K$ is compact and $f$ is continuous, $$E^+:=\{x\in K|\ M_f-w_f\leq f(x)\leq M_f\}\ \ \textrm{and} \ \  E^-:=\{x\in K|\ m_f\leq f(x)\leq m_f+w_f\}$$ are compact subsets of $\Bbb R^n$. They are nonempty, since $f$ is nonconstant. Applying Lemma \ref{monicothm-1} to $E^{\pm}$,
there exist $a,b \in (0,1)$ with $a>b$ and $H\in \mathcal{N}_{3}$ for $\sigma$ activation such that
$$H<b\ \  \textrm{on}\ E^-,\ \ \ \ \ H>a\ \ \textrm{on}\ E^+,\ \ \ \ \ H(\Bbb R^n)\subseteq [0,1].$$ In the $\tau$-case, we apply Lemma \ref{monicothm-2} and get $H\in \mathcal{N}_{3}$ (for $\tau$ activation) with the same above property. Let's set $c:=1-\frac{a-b}{3}\in (0,1)$.

Define $$g_0:=(m_f+w_f)+w_f H\in \mathcal{N}_{3}.$$ Then by (\ref{COHwang}), $-2w_f<g_0<2w_f $ on $K$, so $$||g_0||_\infty \leq \frac{2}{3}(M_f-m_f).$$
Let's estimate $f-g_0$ :
 $$0\leq f-g_0<w_f+w_f(1-a)\ \ \textrm{on}\ E^+, \ \ \ \ \ \  -(w_f+w_fb)<f-g_0\leq 0\ \ \textrm{on}\ E^-$$
$$|f-g_0|<w_f\ \ \textrm{on}\ K-(E^+\cup E^-).$$
Thus $$-w_f(1+b)<f-g_0<w_f(1+1-a)\ \ \ \textrm{on}\ K$$ and hence
$$M_{f-g_0}-m_{f-g_0}\leq w_f(3-a+b)=c\cdot(M_f-m_f).$$

Note that $f-g_0$ is nonconstant, because otherwise $f=g_0+\textrm{constant}\in \mathcal{N}_{3}$. We claim that $$m_{f-g_0}<0<M_{f-g_0}.$$
From the above estimate of $f-g_0$ on $E^-\ne \emptyset$, $m_{f-g_0}<0$ is immediately obtained.  Likewise from the above estimate of $f-g_0$ on $E^+\ne \emptyset$, $M_{f-g_0}>0$ follows.

Hence we can apply the above process to $f-g_0$ instead of $f$, we can obtain $g_1\in \mathcal{N}_3$ such that
$$||g_1||_\infty \leq \frac{2}{3}(M_{f-g_0}-m_{f-g_0})\leq c\cdot\frac{2}{3}(M_f-m_f)$$
$$M_{f-g_0-g_1}-m_{f-g_0-g_1}\leq c\cdot(M_{f-g_0}-m_{f-g_0})\leq c^2\cdot(M_f-m_f).$$
Inductively we can construct a sequence $\{g_n|n=0,1,2,\cdots\}$ in $\mathcal{N}_3$ satisfying
$$||g_n||_\infty \leq c^n\cdot\frac{2}{3}(M_f-m_f),\ \ \ \ \ \ \ M_{f-\Sigma_{i=0}^n g_i}-m_{f-\Sigma_{i=0}^n g_i}\leq c^{n+1}\cdot(M_f-m_f).$$

Since a positive constant $c$ is strictly less than 1, this estimate implies that $\Sigma_{i=0}^\infty g_i$ converges uniformly on $K$ by the Weierstrass M-test and
$$\lim_{n\rightarrow \infty}(M_{f-\Sigma_{i=0}^n g_i}-m_{f-\Sigma_{i=0}^n g_i})=0.$$ Therefore the limit of $f-\Sigma_{i=0}^n g_i$ converging uniformly on $K$ must be a constant function, say $C$, and hence $$f=C+\Sigma_{i=0}^\infty g_i$$ can be approximated arbitrarily closely by $C+\Sigma_{i=0}^n g_i\in \mathcal{N}_{3}$ for sufficiently large $n$. This completes the proof.
\end{proof}

\section{Proof for finite $K$}

When $K$ is finite, point separation lemmas are enough to give a constructive proof of the sharp result $\overline{\mathcal{N}_2}=C(K)$.

\begin{Theorem}
Let $K=\{\bold{p}_1,\cdots,\bold{p}_N\}$ be a finite subset of $\Bbb R^n$ and $\mathcal{N}_{k}$ for each $k$ under sigmoidal or ReLU activation be the previously defined subspace of $C(K)$ with the sup norm.
For any $T\in C(K)$ and any $\epsilon>0$, there exists $F\in \mathcal{N}_{2}$ such that $||F-T||_\infty<\epsilon$.
\end{Theorem}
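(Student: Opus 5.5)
The approach is to build, for each point $\bold{p}_j\in K$, a function in $\mathcal{N}_2$ that is close to $1$ at $\bold{p}_j$ and close to $0$ at every other point of $K$. A linear combination of these ``bumps'' with coefficients $T(\bold{p}_j)$ then approximates $T$. The point separation lemmas (Lemma \ref{pt-closedset-1} for $\sigma$ and Lemma \ref{pt-closedset-2} for $\tau$) already live in $\mathcal{N}_2$. The only concern is that combining several of them must not push us into $\mathcal{N}_3$.

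The plan avoids any product or max of separating functions, which would require an extra layer. Instead, apply the lemmas with $A=\{\bold{p}_j\}$ and the single point $\bold{p}_j$ replaced by $\bold{p}_j$ itself in the ``hole'' role. Concretely, apply the lemma to the closed set $A_j:=K\setminus\{\bold{p}_j\}$, which is finite and hence closed, and the point $\bold{p}_j$. This gives $h_j\in\mathcal{N}_2$ with $h_j(\bold{p}_j)<\eta$ and $h_j>1-\eta$ on $A_j$, with values in $[0,1]$ (in the $\tau$ case $h_j(\bold{p}_j)=0$). Set $e_j:=1-h_j\in\mathcal{N}_2$; this is legitimate since $\mathcal{N}_2$ is closed under affine combinations. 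Then $e_j(\bold{p}_j)>1-\eta$ and $|e_j|<\eta$ on $K\setminus\{\bold{p}_j\}$. Since a single lemma application already separates $\bold{p}_j$ from \emph{all} other points of $K$ at once, no aggregation layer is needed.

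Define $F:=\sum_{j=1}^N T(\bold{p}_j)\,e_j\in\mathcal{N}_2$. At $\bold{p}_k$,
$$|F(\bold{p}_k)-T(\bold{p}_k)|\le |T(\bold{p}_k)|\,|e_k(\bold{p}_k)-1|+\sum_{j\ne k}|T(\bold{p}_j)|\,|e_j(\bold{p}_k)|< N\|T\|_\infty\,\eta .$$
Choosing $\eta<\epsilon/(N\|T\|_\infty+1)$ finishes the proof. The trivial cases $N=1$ (where $A_j$ is empty and $T$ is just a constant in $\mathcal{N}_2$) and $T\equiv 0$ are handled separately.

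The only step needing care is verifying that Lemmas \ref{pt-closedset-1} and \ref{pt-closedset-2} apply with $A$ an arbitrary nonempty closed set, here a finite one, and that their output has the stated two-sided bounds. I expect no real obstacle. The main point to emphasize is that the Riemann-sum hole functions $\tilde\Psi$ and $\tilde\Phi$ are genuinely single-hidden-layer sums of activated affine functions, so $F$ is explicitly constructed in $\mathcal{N}_2$.
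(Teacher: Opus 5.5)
Your proposal is correct and follows essentially the paper's argument: per-point $\mathcal{N}_2$ indicators obtained from Lemma \ref{pt-closedset-1} or Lemma \ref{pt-closedset-2}, combined linearly with coefficients given by the values of $T$. The only real difference is bookkeeping. The paper shifts by $t_1$ so that the coefficients $T(\bold{p}_i)-t_1\in[0,d]$ are nonnegative and uses one-sided bounds. You use signed coefficients and the two-sided bound $|e_j|<\eta$, which is equally valid because $h_j\le 1$.

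Your explicit step $e_j=1-h_j$, with $\bold{p}_j$ in the hole and $K\setminus\{\bold{p}_j\}$ as the closed set, is in fact the precise way to get the paper's $h_i$ from Lemma \ref{pt-closedset-1}. You should delete the garbled sentence before it (``apply the lemmas with $A=\{\bold{p}_j\}$ \ldots''), since it contradicts your concrete version.
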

\begin{proof}
We give a proof when the activation function is $\sigma$. The proof for $\tau$ case can be obtained from the following just by replacing $\sigma$ and Lemma \ref{pt-closedset-1} with $\tau$ and Lemma \ref{pt-closedset-2} respectively.

Let $t_1<t_2<\cdots<t_m$ for $m\leq N$ be all the elements of $T(K)$ and set $d:=t_m-t_1$. If $m=1$ and $d=0$, then take $F\equiv t_1$. Otherwise assume $d>0$ (and hence $N\geq 2$). WLOG we may also assume that $\epsilon< \frac{dN}{2}$.

For each $i=1,\cdots,N$, by applying Lemma \ref{pt-closedset-1} with $A=\{\bold{p}_i\}$ and $B=K-\{\bold{p}_i\}$ one can construct $h_i\in \mathcal{N}_{2}$ such that
$$0\leq h_i|_{K-\{\bold{p}_i\}}<\frac{\epsilon}{dN}<1-\frac{\epsilon}{dN}<h_i(\bold{p}_i)\leq 1.$$
Define $$F:=\sum_{i=1}^N(T(\bold{p}_i)-t_1)h_i+t_1\in \mathcal{N}_{2}.$$
Then for each $i$ $$(T(\bold{p}_i)-t_1)(1-\frac{\epsilon}{dN})+t_1\leq F(\bold{p}_i)\leq (T(\bold{p}_i)-t_1)+\sum_{j\ne i}(T(\bold{p}_j)-t_1)\frac{\epsilon}{dN}+t_1$$ where the first terms of both sides come from the contribution from the $i$-th term in the summation.  Using $T(\bold{p}_i)-t_1\leq d$ for any $i=1,\cdots,N$,
$$(T(\bold{p}_i)-t_1)-d\frac{\epsilon}{dN}+t_1\leq F(\bold{p}_i)\leq T(\bold{p}_i)+(N-1)d\frac{\epsilon}{dN}$$
and hence $$T(\bold{p}_i)-\epsilon< F(\bold{p}_i)< T(\bold{p}_i)+\epsilon,$$
completing the proof.
\end{proof}

\bigskip

\noindent{\bf Declarations}

\medskip

\noindent{\bf Data Availability} : Data sharing is not applicable to this article as no new data were created or analyzed in this study.

\noindent{\bf Conflicts of Interest} : The author has no relevant financial or non-financial interests to disclose.

\noindent{\bf Funding} : No funding has been provided for this study.

\vspace{0.5cm}


\bigskip



\begin{thebibliography}{aa}

\bibitem{barron} A. R. Barron, {\it Universal Approximation Bounds for Superpositions of a Sigmoidal Function}, IEEE Trans. Information Theory {\bf 39} (1993), NO.3, 930-945.

\bibitem{Besse} A. Besse, {\it Einstein manifolds}, Springer-Verlag, New York, 1987.


\bibitem{cybenko} G. Cybenko, {\it Approximation by superpositions of a sigmoidal function}, Math. Control Signal Systems {\bf 2} (1989), 303-314.


\bibitem{hornik} K. Hornik, M. Stinchcombe, and H. White, {\it Multilayer feedforward networks are universal approximators}, Neural Networks {\bf 2} (1989), Issue 5, 359–366.

\bibitem{ito} Y. Ito, {\it Approximation of functions on a compact set by finite sums of a sigmoid function without scaling}, Neural Networks {\bf 4} (1991), Issue 6, 817-826.

\bibitem{kim} N. Kim, C. Min, and S. Park, {\it Minimum width for universal approximation using ReLU networks on compact domain}, International Conference on Learning Representations (ICLR), 2024.

\bibitem{leshno}  M. Leshno, V. Y. Lin, A. Pinkus, and S. Schocken, {\it Multilayer feedforward networks with a nonpolynomial activation function can approximate any function}, Neural Networks {\bf 6} (1993), Issue 6, 861-867.

\bibitem{li} X. Li, {\it Simultaneous approximations of multivariate functions and their derivatives by neural networks with one hidden layer}, Neurocomputing {\bf 12} (1996), Issue 4, 327-343.

\bibitem{monico}  C. Monico, {\it An elementary proof of a universal approximation theorem}, arXiv:2406.10002 [cs.LG].

\bibitem{munk} J. R. Munkres, {\it Topology}, Pearson Education International, 2000.

\bibitem{hidalgo}  E. Paluzo-Hidalgo, R. Gonzalez-Diaz, and M. A. Gutiérrez-Naranjo {\it Two-hidden-layer feed-forward networks are universal approximators: A constructive approach},  Neural Networks {\bf 131} (2020), Issue 5, 29–36.

\bibitem{pinkus} A. Pinkus, {\it Approximation theory of the mlp model in neural networks}, Acta Numerica {\bf 8} (1999), 143-195.

\bibitem{sonoda} S. Sonoda and N. Murata, {\it Neural network with unbounded activation functions is universal approximator}, Appl. Comput. Harm. Anal. {\bf 43} (2017), Issue 2, 233-268.

\bibitem{yarotsky}  D. Yarotsky, {\it Error bounds for approximations with deep ReLU networks}, Neural Networks {\bf 94} (2017), 103-114.

\bibitem{yun} C. Yun, S. Sra, and A. Jadbabaie, {\it Small ReLU networks are powerful memorizers: a tight analysis of memorization capacity}, NeurIPS 2019.

\end{thebibliography}
\end{document}